\def\eqref#1{equation~\ref{#1}}
\def\1{\bm{1}}
\DeclareMathAlphabet{\mathsfit}{\encodingdefault}{\sfdefault}{m}{sl}
\SetMathAlphabet{\mathsfit}{bold}{\encodingdefault}{\sfdefault}{bx}{n}
\renewcommand{\min}[1]{\underset{#1}{\text{min}}\,}
\renewcommand{\inf}[1]{\underset{#1}{\text{inf}}\,}
\newcommand{\argmin}[1]{\underset{#1}{\text{arg min}}\,}
\newcommand{\expectation}[1]{\underset{#1}{\mathbb{E}}}
\newtheorem{theorem}{Theorem}[section]
\newtheorem{lemma}{Lemma}[section]
\newtheorem{definition}{Definition}[section]
\definecolor{myorange}{rgb}{0.906,0.435,0.317}
\definecolor{myblue}{rgb}{0.0,0.314,0.408}
\newacronym{ot}{OT}{Optimal Transport}
\newacronym{mmd}{MMD}{Maximum Mean Discrepancy}
\newacronym{sgd}{SGD}{Stochastic Gradient Descent}
\newacronym{kddm}{KD$^{2}$M}{Knowledge Distillation through Distribution Matching}
\newacronym{kd}{KD}{Knowledge Distillation}
\title{KD$^{2}$M: A unifying framework for feature knowledge distillation}
\date{}
\author{ \href{https://orcid.org/0000-0003-3850-4602}{\includegraphics[scale=0.06]{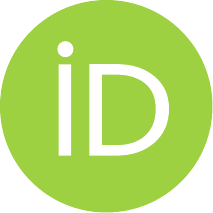}\hspace{1mm}Eduardo Fernandes Montesuma} \\
Sigma Nova\\
Paris, France \\
\texttt{eduardo.montesuma@sigmanova.ai} \\
%% examples of more authors
}
\begin{document}
\maketitle

\begin{abstract}
Knowledge Distillation (KD) seeks to transfer the knowledge of a teacher, towards a student neural net. This process is often done by matching the networks' predictions (i.e., their output), but, recently several works have proposed to match the distributions of neural nets' activations (i.e., their features), a process known as \emph{distribution matching}. In this paper, we propose an unifying framework, Knowledge Distillation through Distribution Matching (KD$^{2}$M), which formalizes this strategy. Our contributions are threefold. We i) provide an overview of distribution metrics used in distribution matching, ii) benchmark on computer vision datasets, and iii) derive new theoretical results for KD.

\begin{tcolorbox}

This paper was accepted at the 7th International Conference on Geometric Science of Information. Our code is available at:

\begin{center}
\faGithub\,\,\,\url{https://github.com/eddardd/kddm}
\end{center}
\end{tcolorbox}
\end{abstract}

% keywords can be removed
\keywords{Knowledge Distillation \and Optimal Transport \and Computational Information Geometry \and Deep Learning}

\section{Introduction}

\gls{kd}~\cite{hinton2015distilling} is a problem within machine learning, which transfers knowledge from a large teacher model to a smaller student model~\cite{gou2021knowledge}.

\begin{figure}[ht]
    \centering
    \includegraphics[width=\linewidth]{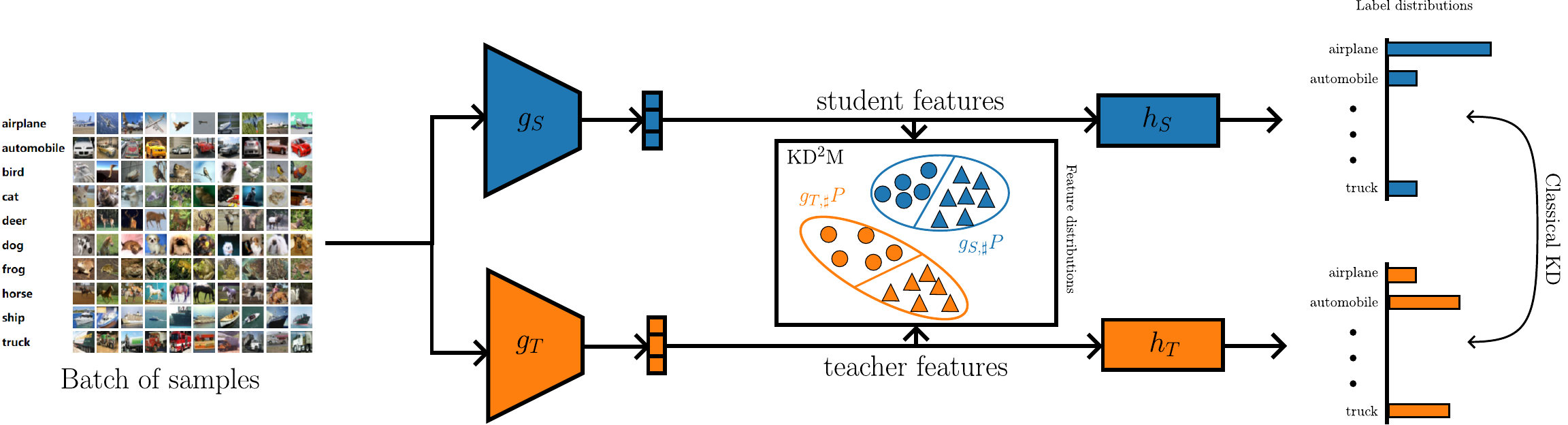}
    \caption{Overview of our knowledge distillation framework, where a batch of samples is fed through a student and a teacher network. First, our framework \gls{kddm} performs distillation by matching the distribution of student's and teacher's features. Second, classical \gls{kd} works by comparing the distributions of student's and teacher's predictions.}
    \label{fig:kddm}
\end{figure}

In this context, \gls{kd} has been closely linked with ideas in information geometry~\cite{nielsen2020elementary} and metrics between probability distributions. Indeed, in their seminal work~\cite{hinton2015distilling}, Hinton, Vinyals, and Dean proposed to distill the knowledge of a teacher toward a student by penalizing students' that generate predictions, i.e., probability distributions over labels, that differ from the teacher. However, this discrepancy only considers \emph{output distribution mismatch}. Recently, a plethora of works~\cite{huang2017like,chen2021wasserstein,lohit2022model,lv2024wasserstein} have considered computing discrepancies in distribution between the activation maps of neural nets, that is, features. We summarize these ideas in Fig.~\ref{fig:kddm}.

% \dfrac{1}{n}\sum_{i=1}^{n}\mathcal{L}(y_{i},h_{S}(g_{S}(\mathbf{x}_{i})))

In this paper, we propose a general framework for \gls{kd} via matching student and teacher feature distributions,
\begin{align}
    \theta^{\star} = \argmin{\theta \in \Theta} \expectation{(\mathbf{x}^{(P)}, y^{(P)}) \sim P}[\mathcal{L}(y^{(P)}, h_{S}(g_{S}(\mathbf{x}^{(P)})))] + \lambda \mathbb{D}(g_{S,\sharp}P, g_{T,\sharp}P),\label{eq:kd_dm}
\end{align}
where $g_{S}:\mathcal{X} \rightarrow \mathcal{Z}$ and $h_{S}:\mathcal{Z}\rightarrow \mathcal{Y}$ are the student's encoder and classifier (resp. teacher), and $g_{S,\sharp}P$ denotes the \emph{push-forward distribution} of $P$ via $g_{S}$ (cf. equation~\ref{eq:definitions}), and $\mathbb{D}$ is a distribution metric or discrepancy. Here, $\mathcal{X}$, $\mathcal{Z}$ and $\mathcal{Y}$ are called data, feature and, label spaces respectively.

Our contributions are as follows. Our theoretical framework encapsulate previous work that rely on the matching of feature distributions, such as~\cite{huang2017like,chen2021wasserstein,lohit2022model,lv2024wasserstein}. Then, we use theoretical results in domain adaptation~\cite{redko2017theoretical,redko2020survey}, for bounding the difference in the generalization error between student and teacher by the Wasserstein distance, and ultimately, by the difference in their encoder networks. Finally, in our experiments, we show that feature-based knowledge distillation improves over the student baseline in all scenarios, with a slight advantage towards probability metrics that consider the labels of samples.

This paper is organized as follows. Section~\ref{sec:prob_metrics} presents the probability metrics used in this work. Section~\ref{sec:kddm} presents the practical implementation of the minimization problem in equation~\ref{eq:kd_dm}. Section~\ref{sec:theoretical_results} presents our theoretical results. Section~\ref{sec:experiments} presents our empirical results. Finally, section~\ref{sec:conclusion} concludes this paper.

\section{Probability Metrics}\label{sec:prob_metrics}

We provide an overview of different metrics $\mathbb{D}$ between probability distributions. We focus on 2 types of models for feature distributions $P_{S} = g_{S,\sharp}P$ and $P_{T} = g_{S,\sharp}P$. The first, called empirical, assumes an i.i.d. sample $\{\mathbf{z}_{i}^{(P_{S})}\}_{i=1}^{n}$, obtained through $\mathbf{z}_{i}^{(P)} = g(\mathbf{x}_{i}^{(P)})$, so that,
\begin{align}
    \hat{P}_{S}(\mathbf{z}) = (g_{S,\sharp}\hat{P})(\mathbf{z}) = \dfrac{1}{n} \sum_{i=1}^{n}\delta(\mathbf{z} - \mathbf{z}_{i}^{(P)}),\label{eq:empirical_student}
\end{align}
and conversely for $\hat{P}_{T}$. If $\hat{P}_{S}$ is a distribution over $\mathcal{Z}\times\mathcal{Y}$, one can consider $\hat{P}(\mathbf{z}, y)$ supported on $\{\mathbf{z}_{i}^{(P)}, y_{i}^{(P)}\}$. Second, we consider Gaussian distributions,
\begin{align*}
    \hat{P}_{S}(\mathbf{z}) = \dfrac{1}{\sqrt{(2\pi)^{d}|\Sigma|}}\text{exp}\biggr(-\dfrac{1}{2}(\mathbf{z}-\hat{\mu})^{T}\hat{\Sigma}^{-1}(\mathbf{z}-\hat{\mu})\biggr).
\end{align*}
where $(\hat{\mu}, \hat{\Sigma})$ are estimated, using maximum likelihood, from $\{\mathbf{z}_{i}^{(P)}\}_{i=1}^{n}$. For feature-label joint distributions, we can instead model each conditional $\hat{P}_{S}(Z|Y)$ as a Gaussian, so that,
\begin{align*}
    \hat{P}_{S}(\mathbf{z},y) = \sum_{y \in \mathcal{Y}}P_{S}(Y=y)P_{S}(Z|Y=y)
\end{align*}

\subsection{Empirical Distributions}

For empirical distributions, we focus on \gls{ot} distances, especially the $2-$Wasserstein distance,
\begin{align}
    \mathbb{W}_{2}(\hat{P}_{S}, \hat{P}_{T})^{2} = \min{\gamma \in \Gamma(\hat{P}_{S}, \hat{P}_{T})}\sum_{i=1}^{n}\sum_{j=1}^{m}\lVert \mathbf{z}_{i}^{(P_{S})} - \mathbf{z}_{j}^{(P_{T})} \rVert_{2}^{2}\gamma_{ij},\label{eq:w2}
\end{align}
where $\Gamma(\hat{P}_{S}, \hat{P}_{T})$ is the set of transport plans. This distance measures the least effort for moving the distribution $\hat{P}_{S}$ into $\hat{P}_{T}$. We refer readers to~\cite{santambrogio2015optimal},~\cite{peyre2019computational} and~\cite{montesuma2024recent} for further details on \gls{ot} theory, its computational aspects, and recent applications in machine learning.

Classical probability metrics are limited, in the sense that they do not consider the label information of distributions. Here, we consider two ways of integrating labels into $\mathbb{D}$, namely, the class-conditional Wasserstein distance ($\mathbb{CW}_{2}$), and the joint Wasserstein distance ($\mathbb{JW}_{2}$). We now define the first one,
\begin{align}
    \mathbb{CW}_{2}(\hat{P}_{S},\hat{P}_{T})^{2} = \dfrac{1}{n_{c}}\sum_{y=1}^{n_{c}}\mathbb{W}_{2}(\hat{P}_{S}(Z|Y=y), \hat{P}_{T}(Z|Y=y))^{2},\label{eq:c_wasserstein}
\end{align}
where $n_{c}$ corresponds to the number of classes. Second, we can compare features and labels jointly:
\begin{align}
    \mathbb{JW}_{2}(\hat{P}_{S},\hat{P}_{T})^{2} = \min{\gamma \in \Gamma(\hat{P},\hat{Q})}\sum_{i=1}^{n}\sum_{j=1}^{m}\gamma_{ij}(\lVert \mathbf{z}_{i}^{(P_{S})} - \mathbf{z}_{j}^{(P_{T})} \rVert^{2} + \beta \mathcal{L}(h(\mathbf{z}_{i}^{(P_{S})}), h(\mathbf{z}_{j}^{(P_{T})})),\label{eq:j_wasserstein}
\end{align}
where $\mathcal{L}$ is a discrepancy between labels. Previous works have considered, for instance, the Euclidean distance~\cite{montesuma2023multi} between one-hot encoded vectors, or the cross-entropy loss~\cite{courty2017joint}. We refer readers to~\cite{montesuma2024thesis} for further details on this metric.

\subsection{Gaussian Distributions}

For Gaussian distributions, $\hat{P}_{S} = \mathcal{N}(\mu^{(P_{S})}, \Sigma^{(P_{S})})$ (resp. $\hat{P_{T}}$), we consider a metric, and a discrepancy. First, \gls{ot} has closed-form solution for Gaussian distributions, see~\cite{takatsu2011wasserstein} and~\cite[Ch. 8]{peyre2019computational}. In this case,
\begin{align}
    \mathbb{W}_{2}(\hat{P}_{S}, \hat{P}_{T})^{2} = \lVert \hat{\mu}^{(P_{S})} - \hat{\mu}^{(P_{T})} \rVert_{2}^{2} + \mathcal{B}(\hat{\Sigma}^{(P_{S})}, \hat{\Sigma}^{(P_{T})}).\label{eq:g_w2}
\end{align}
where $\mathcal{B}(A, B) = \text{Tr}(A) + \text{Tr}(B) - 2 \text{Tr}\left( \sqrt{ \sqrt{A} B \sqrt{A} }\right)$, and $\sqrt{A}$ denotes the square root matrix of $A$. When $\Sigma^{(P_{S})}$ and $\Sigma^{(P_{T})}$ are diagonal, this formula simplifies to $\mathbb{W}_{2}(P_{S}, P_{T})^{2} = \lVert 
\mu^{(P_{S})} - \mu^{(P_{T})} \rVert_{2}^{2} + \lVert \sigma^{(P_{S})} - \sigma^{(P_{T})} \rVert_{2}^{2}$, which induces an Euclidean geometry over $(\mu,\sigma) \in \mathbb{R}^{d}\times\mathbb{R}_{+}^{d}$.

Second, one can consider the Kullback-Leibler divergence~\cite{kullback1951information}, which has a closed-form for Gaussians as well,
\begin{align}
    \mathbb{KL}(\hat{P}_{S}|\hat{P}_{T}) = \dfrac{1}{2}\biggr(\text{Tr}((\hat{\Sigma}^{(P_{T})})^{-1}\hat{\Sigma}^{(P_{S})}) &+ (\hat{\mu}^{(P_{T})}-\hat{\mu}^{(P_{S})})^{T}(\hat{\Sigma}^{(P_{T})})^{-1}(\hat{\mu}^{(P_{T})}-\hat{\mu}^{(P_{S})})\nonumber\\ &- d + \log\biggr(\dfrac{\text{det}(\hat{\Sigma}^{(P_{T})})}{\text{det}(\hat{\Sigma}^{(P_{S})})}\biggr)\biggr),\label{eq:g_kl}
\end{align}
which, for diagonal covariance matrices, simplifies to,
\begin{align*}
    \mathbb{KL}(\hat{P}_{S}|\hat{P}_{T}) = \biggr\lVert \dfrac{\sigma^{(P_{S})}}{\sigma^{(P_{T})}} \biggr\rVert_{2}^{2} + \biggr\lVert \dfrac{\mu^{(P_{T})} - \mu^{(P_{S})}}{\sigma^{(P_{T})}} \biggr\rVert_{2}^{2} - d + 2\sum_{i=1}^{d}\log\biggr{(}\dfrac{\sigma_{i}^{(P_{T})}}{\sigma^{(P_{S})}_{i}}\biggr{)}.
\end{align*}
In this case, $\mathbb{KL}$ is associated with a hyperbolic geometry in the space of parameters, see~\cite[Ch. 8, Remark 8.2]{peyre2019computational} and~\cite[Section 2]{montesuma2024recent}.

\section{Knowledge Distillation through Distribution Matching}\label{sec:kddm}

Following our discussion in the last section, we now describe a practical implementation for minimizing the distance, in distribution, between student and teacher features. We assume a fixed distribution $P$, from which we have access to samples $\{\mathbf{x}_{i}^{(P)}, y_{i}^{(P)}\}_{i=1}^{n}$, $\mathbf{x}_{i}^{(P)} \in \mathcal{X}$ and $y_{i}^{(P)} \in \mathcal{Y}$.

\begin{algorithm}[ht]
  \caption{Training step of KD$^{2}$M}
  \label{alg:kddm}
  \Function{training\_on\_minibatch($\{\mathbf{x}_{i}^{(P)}, y_{i}^{(P)}\}_{i=1}^{n}, \lambda$)}{
    \Comment{Forward pass - Student}
    $\mathbf{Z}^{(P_{S})} \leftarrow \{g_{S}(\mathbf{x}_{i}^{(P)})\}_{i=1}^{n}$, and, $\hat{\mathbf{Y}}^{(P_{S})} \leftarrow \{h_{S}(\mathbf{z}_{i}^{(P_{S})})\}$\;
    \Comment{Classification loss - Student}
    $\mathcal{L}_{c} \leftarrow -\dfrac{1}{n}\sum_{i=1}^{n}\sum_{y=1}^{n_{c}}y_{ic}^{(P)}\log \hat{y}_{ic}^{(P_{S})}$\;
    \Comment{Forward pass - Teacher}
    $\mathbf{Z}^{(P_{T})} \leftarrow \{g_{T}(\mathbf{x}_{i}^{(P)})\}_{i=1}^{n}$, and, $\hat{\mathbf{Y}}^{(P_{T})} \leftarrow \{h_{T}(\mathbf{z}_{i}^{(P_{T})})\}$\;
    \Comment{Feature distillation loss}
    $\mathcal{L}_{d} \leftarrow \text{compute\_distribution\_distance}(\mathbf{Z}^{(P_{S})}, \mathbf{Z}^{(P_{T})}, \mathbf{Y}^{(P)}, \hat{\mathbf{Y}}^{(P_{S})}, \hat{\mathbf{Y}}^{(P_{T})})$\;
    \Return{$\mathcal{L}_{c} + \lambda \mathcal{L}_{d}$}\;
  }
\end{algorithm}

Denoting the empirical approximation of $P$ by $\hat{P}$, we obtain the student and teacher distributions through the push-forward of $\hat{P}$ through each encoder network, i.e., $\hat{P}_{S} = g_{S,\sharp}\hat{P}$ (resp. $\hat{P}_{T}$), as we show in equation~\ref{eq:empirical_student}. We match $\hat{P}_{S}$ with $\hat{P}_{T}$ by minimizing some probability metric $\mathbb{D}$ with respect to the parameters of $g_{S}$. In practice, we use \gls{sgd}, that is, we draw mini batches from the distribution $P$, feed them through the training student network and the frozen target distribution, then, we compute the gradients through automatic differentiation~\cite{paszke2017automatic}. The mismatch comes from the fact that $g_{S} \neq g_{T}$, therefore, the feature distributions $g_{S,\sharp}P$ and $g_{T,\sharp}P$ are, in general, different. We call our proposed method \gls{kddm}, shown in Algorithm~\ref{alg:kddm}.

Recent work has used the Wasserstein distance, especially in its empirical~\cite{chen2021wasserstein,lohit2022model} (equation~\ref{eq:w2}) and Gaussian formulations~\cite{lv2024wasserstein} (equation~\ref{eq:g_w2}) for \gls{kd} of feature distributions. Likewise,~\cite{huang2017like} used \gls{mmd} for the same purpose. The matching of class-conditional distributions (e.g. equations~\ref{eq:c_wasserstein}) has been widely used in the related field of \emph{dataset distillation}, which aims at compressing datasets with respect to their number of samples~\cite{liu2023dataset,zhao2023dataset}. The joint Wasserstein distance (equation~\ref{eq:j_wasserstein}) has been used to this end~\cite{montesuma2024multi}. Finally, the Kullback-Leibler divergence is a cornerstone of \gls{kd}, used to compare neural net predictions~\cite{hinton2015distilling} and feature distributions~\cite{lv2024wasserstein}.

% \noindent\textbf{Challenges and Limitations.} Algorithm~\ref{alg:kddm} faces the usual challenges of optimizing a neural net through \gls{sgd}, that is, the objective function is highly non-convex. Besides this point, the estimation of distribution metrics at the level of a mini-batch is not as straight forward, especially in high dimensions. The two main challenges at this point involve the curse of dimensionality related to \gls{ot}~\cite[Sec. 8]{montesuma2024recent}, as well as model misspecification associated with Gaussian modeling. Although we leave these questions for future works, we note that, empirically, the \gls{kddm} framework improves performance by training the student network from scratch.

\section{Theoretical Results}\label{sec:theoretical_results}

In this section, we draw on our framework for deriving error bounds for \gls{kd}. These bounds are inspired by the domain adaptation theory~\cite{redko2020survey}, which deals with questions similar to those of \gls{kddm}. Before proceeding, we recall that, for a measurable function $g$,
\begin{align}
    \lVert g \rVert_{L_{2}(P)}^{2} = \int_{\mathcal{Z}} |g(\mathbf{z})|^{2}dP(\mathbf{z})\text{, and, }(g_{\sharp}P)(A) = P(g^{-1}(A)).\label{eq:definitions}
\end{align}
For simplicity, we state our results for $\mathcal{Z}$, which can be understood as a sub-set of $\mathbb{R}^{d}$, that is, we are using $d-$dimensional feature vectors. Furthermore, we denote by $\mathcal{P}(\mathcal{Z})$ the set of distributions over $\mathcal{Z}$.

We begin our analysis with the definition of generalization error,

\begin{definition}{(Error)}
Given a $P \in \mathcal{P}(\mathcal{X})$, a loss function $\mathcal{L}:\mathcal{Y}\times\mathcal{Y}\rightarrow \mathbb{R}_{+}$, and a ground truth $f_{0}:\mathcal{X} \rightarrow \mathcal{Y}$, the generalization error of $f$ is,
\begin{align*}
    \mathcal{R}_{P}(f)=\mathbb{E}_{x \sim P}[\mathcal{L}(f(x), f_{0}(x))]
\end{align*}
\end{definition}

For neural nets, $f = g \circ h$, where $g:\mathcal{X}\rightarrow\mathcal{Z}$ is a feature extraction, and $h:\mathcal{Z}\rightarrow\mathcal{Y}  $ is a feature classifier. Naturally, the same definition applies over $\mathcal{Z}$ instead of $\mathcal{X}$, by considering $\mathcal{L}(h(z), h_{0}(z))$.

In the following we consider the risk over extracted features, i.e., we consider distributions over the latent space $\mathcal{Z}$. Next, we use a result from domain adaptation~\cite[Lemma 1]{redko2017theoretical} for bounding the generalization error of the student by the generalization error of the teacher.

\begin{lemma}{\cite{redko2017theoretical}}
    Let $\mathcal{Z} \subset \mathbb{R}^{d}$ be separable. Let $P_{S}, P_{T} \in \mathcal{P}(\mathcal{Z})$. Assume $c(\mathbf{z}, \mathbf{z}') = \lVert \mathbf{z} - \mathbf{z}' \rVert_{\mathcal{H}_{k}}$, where $\mathcal{H}_{k}$ is a reproducing kernel Hilbert space with kernel $k:\mathcal{Z}\times\mathcal{Z}\rightarrow\mathbb{R}$ induced by $\phi:\mathcal{Z}\rightarrow\mathcal{H}_{k}$. Assume that $\mathcal{L}_{h,h'}(\mathbf{z}) = |h(\mathbf{z}) - h'(\mathbf{z})|$, and that $k$ is squared root integrable with respect $P_{S}$ and $P_{T}$, and $0 \leq k(\mathbf{z},\mathbf{z}') \leq K, \forall \mathbf{z},\mathbf{z'} \in \mathcal{Z}$. Assuming $\lVert 
\mathcal{L} \rVert_{\mathcal{H}_{k}} \leq 1$,
    \begin{equation}
        | \mathcal{R}_{P_{S}}(h) - \mathcal{R}_{P_{T}}(h) | \leq \mathbb{W}_{2}(P_{S}, P_{T}).\label{eq:w2_bound}
    \end{equation}
\end{lemma}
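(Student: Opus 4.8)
The plan is to exhibit the difference of risks as the integral of a single function against the signed measure $P_S - P_T$, and then to bound that integral by a Wasserstein distance via Kantorovich–Rubinstein-type duality in the RKHS $\mathcal{H}_k$. First I would write
\begin{align*}
    \mathcal{R}_{P_S}(h) - \mathcal{R}_{P_T}(h)
    = \int_{\mathcal{Z}} \mathcal{L}_{h,h_0}(\mathbf{z})\, dP_S(\mathbf{z}) - \int_{\mathcal{Z}} \mathcal{L}_{h,h_0}(\mathbf{z})\, dP_T(\mathbf{z}),
\end{align*}
where $\mathcal{L}_{h,h_0}(\mathbf{z}) = |h(\mathbf{z}) - h_0(\mathbf{z})|$ is the pointwise loss against the ground-truth classifier $h_0$ on $\mathcal{Z}$. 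Since we assume $\lVert \mathcal{L}_{h,h_0} \rVert_{\mathcal{H}_k} \le 1$, this function is a valid test function for the dual representation of $\mathbb{W}_2$ under the ground cost $c(\mathbf{z},\mathbf{z}') = \lVert \mathbf{z} - \mathbf{z}'\rVert_{\mathcal{H}_k} = \lVert \phi(\mathbf{z}) - \phi(\mathbf{z}')\rVert_{\mathcal{H}_k}$; in particular $\mathcal{L}_{h,h_0}$ is $1$-Lipschitz with respect to this cost, because for an RKHS function $f$ with $\lVert f\rVert_{\mathcal{H}_k}\le 1$ one has $|f(\mathbf{z}) - f(\mathbf{z}')| = |\langle f, \phi(\mathbf{z}) - \phi(\mathbf{z}')\rangle_{\mathcal{H}_k}| \le \lVert \phi(\mathbf{z}) - \phi(\mathbf{z}')\rVert_{\mathcal{H}_k} = c(\mathbf{z},\mathbf{z}')$.

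Next I would invoke the Kantorovich–Rubinstein duality: for this metric cost,
\begin{align*}
    \sup_{\lVert f \rVert_{\mathrm{Lip}} \le 1} \left( \int f\, dP_S - \int f\, dP_T \right) = \mathbb{W}_1(P_S, P_T) \le \mathbb{W}_2(P_S, P_T),
\end{align*}
the last inequality being Jensen/Hölder applied to the transport plan (the $p$-Wasserstein distances are nondecreasing in $p$). The integrability hypothesis — $k$ square-root integrable with respect to $P_S$ and $P_T$, together with $0 \le k(\mathbf{z},\mathbf{z}') \le K$ — is what guarantees that $\phi(\mathbf{z})$ has finite second moment under both measures, so that $\mathbb{W}_2(P_S,P_T)$ is finite and the duality applies; the boundedness and measurability assumptions also make $\mathcal{L}_{h,h_0}$ integrable. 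Chaining these with the Lipschitz bound on $\mathcal{L}_{h,h_0}$ gives
\begin{align*}
    \mathcal{R}_{P_S}(h) - \mathcal{R}_{P_T}(h) \le \mathbb{W}_1(P_S, P_T) \le \mathbb{W}_2(P_S, P_T),
\end{align*}
and applying the same argument with $P_S$ and $P_T$ swapped (or to $-\mathcal{L}_{h,h_0}$) yields the absolute value in \eqref{eq:w2_bound}.

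The main obstacle, and the step requiring the most care, is justifying the Kantorovich–Rubinstein duality in this RKHS-metric setting: one must confirm that $c(\mathbf{z},\mathbf{z}') = \lVert \phi(\mathbf{z}) - \phi(\mathbf{z}')\rVert_{\mathcal{H}_k}$ is a genuine metric on $\mathcal{Z}$ (non-negativity and triangle inequality are clear; symmetry is clear; one needs $\phi$ injective, or else works on the quotient), that $(\mathcal{Z},c)$ is separable and complete enough for the duality theorem to hold — this is where the hypothesis that $\mathcal{Z}\subset\mathbb{R}^d$ is separable enters — and that every $1$-Lipschitz function with respect to $c$ arises in a way compatible with the supremum above. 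Given these structural facts, the rest is the standard duality-plus-monotonicity argument sketched; since the excerpt attributes the statement to \cite{redko2017theoretical}, I would cite their Lemma~1 for the delicate measure-theoretic justification and present the chain of inequalities above as the proof.
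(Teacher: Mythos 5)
The paper does not prove this lemma; it is imported verbatim from \cite{redko2017theoretical}, so there is no in-paper argument to compare against. Your reconstruction is correct and is essentially the standard proof from that reference: the reproducing property plus $\lVert \mathcal{L}\rVert_{\mathcal{H}_k}\le 1$ gives $|\mathcal{L}(\mathbf{z})-\mathcal{L}(\mathbf{z}')| = |\langle \mathcal{L}, \phi(\mathbf{z})-\phi(\mathbf{z}')\rangle_{\mathcal{H}_k}| \le c(\mathbf{z},\mathbf{z}')$, and then one passes to $\mathbb{W}_1 \le \mathbb{W}_2$ by Cauchy--Schwarz/Jensen on the optimal plan. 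The one place you overstate the difficulty is the appeal to full Kantorovich--Rubinstein duality: you only need the trivial direction, i.e., for any coupling $\gamma\in\Gamma(P_S,P_T)$ write $\int \mathcal{L}\,d(P_S-P_T) = \int (\mathcal{L}(\mathbf{z})-\mathcal{L}(\mathbf{z}'))\,d\gamma \le \int c\,d\gamma$ and take the infimum over $\gamma$; this requires no separability, completeness, or injectivity of $\phi$ (a pseudometric cost is fine), so the ``delicate measure-theoretic justification'' you defer to the reference is not actually needed for the inequality in the stated direction. Working directly with the coupling, as Redko et al.\ do, makes the proof self-contained.
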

\noindent\textbf{Discussion.} The previous lemma holds uniformly for all hypothesis $h:\mathcal{Z}\rightarrow\mathcal{Y}$. In this sense, the r.h.s. does not explicitly depends on $h$, as the bound is obtained for the worst-case scenario. We refer readers to~\cite{redko2017theoretical} and~\cite{redko2020survey} for more details.

Based on this lemma, we leverage the special structure between the student and teacher feature distributions, that is, $P_{S} = g_{S,\sharp}P$ and $P_{T} = g_{T,\sharp}P$, which we now present,
\begin{theorem}\label{thm:main_result}
    Under the same conditions of Lemma 1, let $P \in \mathcal{P}(\mathcal{Z})$ be a fixed distribution. Let $g_{S}$ and $g_{T}$ be two measurable mappings from $\mathcal{X}$ to a latent space $\mathcal{Z} \subset \mathbb{R}^{d}$, such that $\lVert g_{S} \rVert_{L_{2}(P)} < \infty$ and $\lVert g_{T} \rVert_{L_{2}(P)} < \infty$. Define $P_{S} = g_{S,\sharp}P$ and $P_{T} = g_{T,\sharp}P$, then,
    \begin{align}
        | \mathcal{R}_{P_{S}}(h) - \mathcal{R}_{P_{T}}(h) | \leq \lVert g_{S} - g_{T} \rVert_{L_{2}(P)}\label{eq:l2_bound}
    \end{align}
\end{theorem}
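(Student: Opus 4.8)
The plan is to chain two inequalities: the distributional bound already established in Lemma 1, followed by a transport-plan argument bounding $\mathbb{W}_2(P_S,P_T)$ by the $L_2(P)$-distance between the encoders. First I would note that the hypotheses $\lVert g_S\rVert_{L_2(P)}<\infty$ and $\lVert g_T\rVert_{L_2(P)}<\infty$ guarantee that $P_S=g_{S,\sharp}P$ and $P_T=g_{T,\sharp}P$ have finite second moments, so $\mathbb{W}_2(P_S,P_T)$ is finite and well-defined and all the assumptions of Lemma 1 are in force; hence $|\mathcal{R}_{P_S}(h)-\mathcal{R}_{P_T}(h)|\le\mathbb{W}_2(P_S,P_T)$.

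The core of the argument is to exhibit a single explicit coupling of $P_S$ and $P_T$ and read off its cost. I would take the measurable map $\Psi:x\mapsto(g_S(x),g_T(x))$ and set $\pi=\Psi_{\sharp}P$. A one-line marginal check shows $\pi(A\times\mathcal{Z})=P(g_S^{-1}(A))=P_S(A)$ and $\pi(\mathcal{Z}\times A)=P_T(A)$ for measurable $A\subset\mathcal{Z}$, so $\pi\in\Gamma(P_S,P_T)$. This $\pi$ is feasible but in general suboptimal, which is exactly what we want: it yields an upper bound on $\mathbb{W}_2$.

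Then, using that $\mathbb{W}_2^2$ is the infimum of the transport cost over $\Gamma(P_S,P_T)$ together with the change-of-variables formula for push-forwards, I would write
\begin{align*}
\mathbb{W}_2(P_S,P_T)^2 \le \int_{\mathcal{Z}\times\mathcal{Z}}\lVert \mathbf{z}-\mathbf{z}'\rVert_2^2\,d\pi(\mathbf{z},\mathbf{z}') = \int_{\mathcal{X}}\lVert g_S(x)-g_T(x)\rVert_2^2\,dP(x) = \lVert g_S - g_T\rVert_{L_2(P)}^2,
\end{align*}
the last quantity being finite by the triangle inequality $\lVert g_S-g_T\rVert_{L_2(P)}\le\lVert g_S\rVert_{L_2(P)}+\lVert g_T\rVert_{L_2(P)}$. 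Taking square roots and combining with Lemma 1 gives $|\mathcal{R}_{P_S}(h)-\mathcal{R}_{P_T}(h)|\le\lVert g_S-g_T\rVert_{L_2(P)}$, which is (\ref{eq:l2_bound}).

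No step is genuinely hard; the only delicate points are (i) the measurability of $\Psi$ and the marginal computation, which is routine, and (ii) reconciling the cost function: Lemma 1 is stated with cost $\lVert\mathbf{z}-\mathbf{z}'\rVert_{\mathcal{H}_k}$, so strictly one should either assume the feature map $\phi$ is the identity (so the RKHS norm restricts to the Euclidean norm on $\mathcal{Z}$), or replace $\lVert g_S-g_T\rVert_{L_2(P)}$ by $\lVert\phi\circ g_S-\phi\circ g_T\rVert_{L_2(P)}$ throughout; I expect the intended reading is the former, under which the displayed chain is exactly correct. A further cosmetic point: the statement writes $P\in\mathcal{P}(\mathcal{Z})$ while $g_S,g_T$ act on $\mathcal{X}$, so I would read $P$ as a distribution on the domain $\mathcal{X}$ of the encoders.
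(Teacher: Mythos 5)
Your proof is correct and follows essentially the same route as the paper: apply Lemma 1 and then bound $\mathbb{W}_{2}(P_{S},P_{T})$ by exhibiting the diagonal (graph) coupling induced by $x\mapsto(g_{S}(x),g_{T}(x))$, whose cost is exactly $\lVert g_{S}-g_{T}\rVert_{L_{2}(P)}^{2}$. If anything, your version is slightly more careful than the paper's, since you verify the marginals of the coupling explicitly and flag the RKHS-versus-Euclidean cost mismatch and the $\mathcal{P}(\mathcal{Z})$/$\mathcal{P}(\mathcal{X})$ typo, all of which the paper glosses over.
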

\begin{proof}
    Our goal is to show $\mathbb{W}_{2}(g_{S,\sharp}P, g_{T,\sharp}P) \leq \lVert g_{S} - g_{T} \rVert_{L_{2}(P)}$, from which equation~\ref{eq:l2_bound} follows from equation~\ref{eq:w2_bound}. To establish the result, note that,
    \begin{align*}
        \mathbb{W}_{2}(g_{S,\sharp}P,g_{T,\sharp}P)^{2} = \inf{\gamma \in \Gamma(P, P)}\int \lVert g_{S}(x) - g_{T}(x')\rVert_{2}^{2}d\gamma(x,x').
    \end{align*}
    Choose $\gamma(x, x') = P(x)\delta(x - x') \in \Gamma(P, P)$. Since $\mathbb{W}_{2}$ is as an infimum over $\gamma$,
    \begin{align*}
        \mathbb{W}_{2}(g_{S,\sharp}P,g_{T,\sharp}P)^{2} \leq \int\lVert g_{S}(x) - g_{T}(x')\rVert_{2}^{2}P(x)dx = \lVert g_{S} - g_{T} \rVert_{L_{2}(P)}^{2}
    \end{align*}
\end{proof}

\noindent\textbf{Discussion.} This theorem bounds the risks via the $L_{2}(P)$ distance of the encoder networks $g_{S}$ and $g_{T}$. This result is possible precisely because $P_{S}$ and $P_{T}$ are the image distributions of $P$ via $g_{S}$ and $g_{T}$. These bounds lead to an interesting consequence. If $g_{S} \rightarrow g_{T}$ uniformly, then $\lVert g_{S} - g_{T} \rVert_{L_{2}(P)} \rightarrow 0$ and $\mathbb{W}_{2}(g_{S,\sharp}P, g_{T,\sharp}P) \rightarrow 0$, which implies $\mathcal{R}_{P_{S}}(h) \rightarrow \mathcal{R}_{P_{T}}(h)$, i.e., both networks achieve the same generalization error.

\section{Experiments}\label{sec:experiments}

% In your document:
    \begin{wraptable}[19]{r}{0.5\textwidth}  % [lineheight]{position}{width}
    \centering
    \caption{Classification accuracy (in \%) of \gls{kddm} for different distribution metrics on computer vision benchmarks. Distances are either over empirical (E) or Gaussian (G) approximations.}
    \resizebox{\linewidth}{!}{%
        \begin{tabular}{lcccc}
            \toprule
            Method & SVHN & CIFAR-10 & CIFAR-100 & Avg. \\
            \midrule
            Student & ResNet18 & ResNet18 & ResNet18 & -- \\
            Teacher & ResNet34 & ResNet34 & ResNet34 & -- \\
            \midrule
            Student & 93.10 & 85.11 & 56.66 & 78.29 \\
            Teacher & 94.41 & 86.98 & 62.21 & 81.20 \\
            \midrule
            $\mathbb{W}_{2}$ (E) & 94.00 & 86.45 & 61.07 & 80.51 \\
            $\mathbb{CW}_{2}$ (E) & \textbf{94.06} & 86.54 & \textbf{61.47} & \textbf{80.69} \\
            $\mathbb{JW}_{2}$ (E) & 94.00 & \textbf{86.60} & 61.07 & 80.55 \\
            \midrule
            $\mathbb{W}_{2}$ (G) & 93.94 & 86.63 & 60.68 & 80.41 \\
            $\mathbb{CW}_{2}$ (G) & 93.95 & 86.25 & 61.43 & 80.54 \\
            $\mathbb{KL}$ (G) & 94.05 & 86.44 & 60.66 & 80.38 \\
            \bottomrule
        \end{tabular}%
    }
    \label{tab:results-kd}
\end{wraptable}
In this section, we empirically validate our \gls{kddm} framework. We experiment with datasets in computer vision, namely SVHN~\cite{netzer2011reading}, CIFAR-10, and CIFAR-100~\cite{krizhevsky2009learning}. For the backbones, we use ResNet~\cite{he2016deep} with 18 and 34 layers (student and teacher, respectively). Results are summarized in Table~\ref{tab:results-kd}.

In all of our experiments, we optimize ResNets with a SGD optimizer with learning rate $0.01$ and $0.9$ momentum factor for 15 epochs. Here, an epoch is defined as a full passage through the dataset. At the end of each epoch, we schedule the learning rate using Cosine annealing\footnote{\url{https://pytorch.org/docs/stable/generated/torch.optim.lr_scheduler.CosineAnnealingLR.html}}~\cite{loshchilov2016sgdr}, with a minimum learning rate of $10^{-4}$.

All benchmarks are composed of $32 \times 32 \times 3$ RGB images. SVHN is composed of 600000 images of printed digits ($0$ to $9$). CIFAR10 and CIFAR100 are composed of 60000 images each. For CIFAR10, we have 10 classes, whereas CIFAR100 has 100 classes.

In all three datasets, we use data augmentation. For SVHN we use random cropping, random horizontal flip, and we then normalize images over each channel using $\mu = (0.4377, 0.4438, 0.4728)$ and $\sigma = (0.1980, 0.2010, 0.1970)$. For CIFAR10 and 100, we use the same procedure, except that we use $\mu = (0.5071, 0.4867, 0.4408)$ and $\sigma = (0.2675, 0.2565, 0.2761)$.

Feature distillation improves over the student baseline in \emph{all} experiments. Our theoretical results (Theorem~\ref{thm:main_result}) show the student's performance is upper-bounded by the teacher's, which we also verify in practice. Distillation methods perform similarly, with slight advantages for label-aware metrics, as they better match feature-label joint distributions.

For SVHN, we divide our analysis in two. First, in Fig.~\ref{fig:performances_svhn}, we show a comparison of the classification and distillation losses, as well as accuracy per training epoch for various $\lambda$ (see equation~\ref{eq:kd_dm}). This remark shows that for reasonable values of $\lambda$, \gls{kddm} improves over the baseline. Second, in Fig.~\ref{fig:comparison-tsne-svhn}, we analyze the distribution of extracted features for the teacher (red points) and the student (blue points). Through \gls{kddm}, we achieve better alignment of these features.

\begin{figure}[ht]
    \centering
    \begin{subfigure}{0.32\linewidth}
        \centering
        \includegraphics[width=\linewidth]{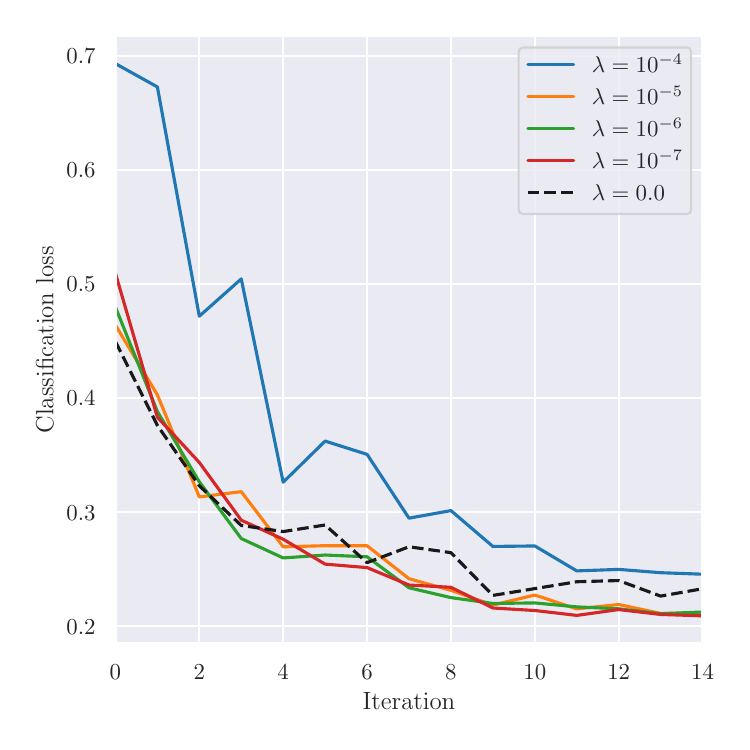}
    \end{subfigure}\hfill
    \begin{subfigure}{0.32\linewidth}
        \centering
        \includegraphics[width=\linewidth]{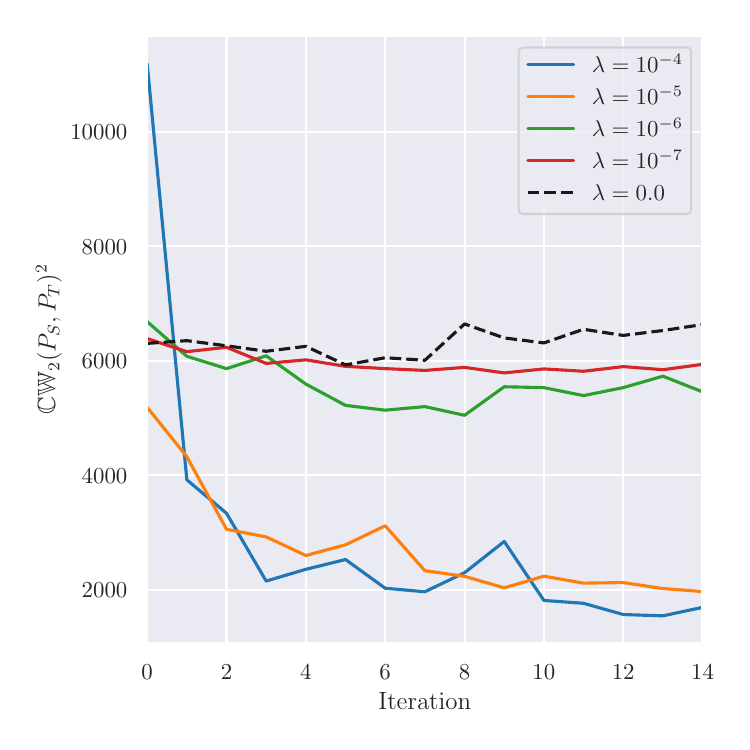}
    \end{subfigure}\hfill
    \begin{subfigure}{0.32\linewidth}
        \centering
        \includegraphics[width=\linewidth]{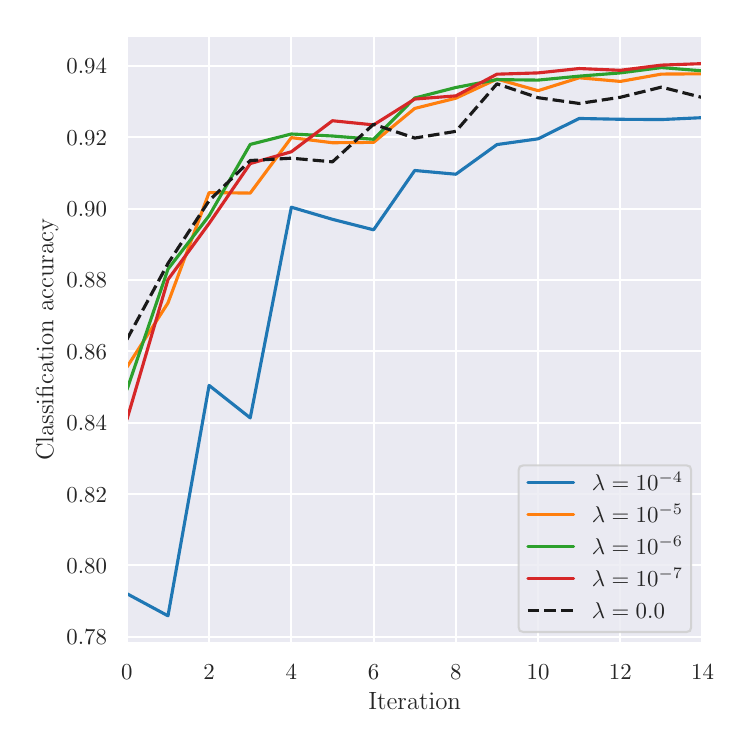}
    \end{subfigure}\hfill
    \caption{Classification and distillation losses, alongside accuracy on the SVHN benchmark. Using \gls{kddm} improves over the no distillation baseline.}
    \label{fig:performances_svhn}
\end{figure}

\begin{figure}[ht]
    \centering
    \includegraphics[width=\linewidth]{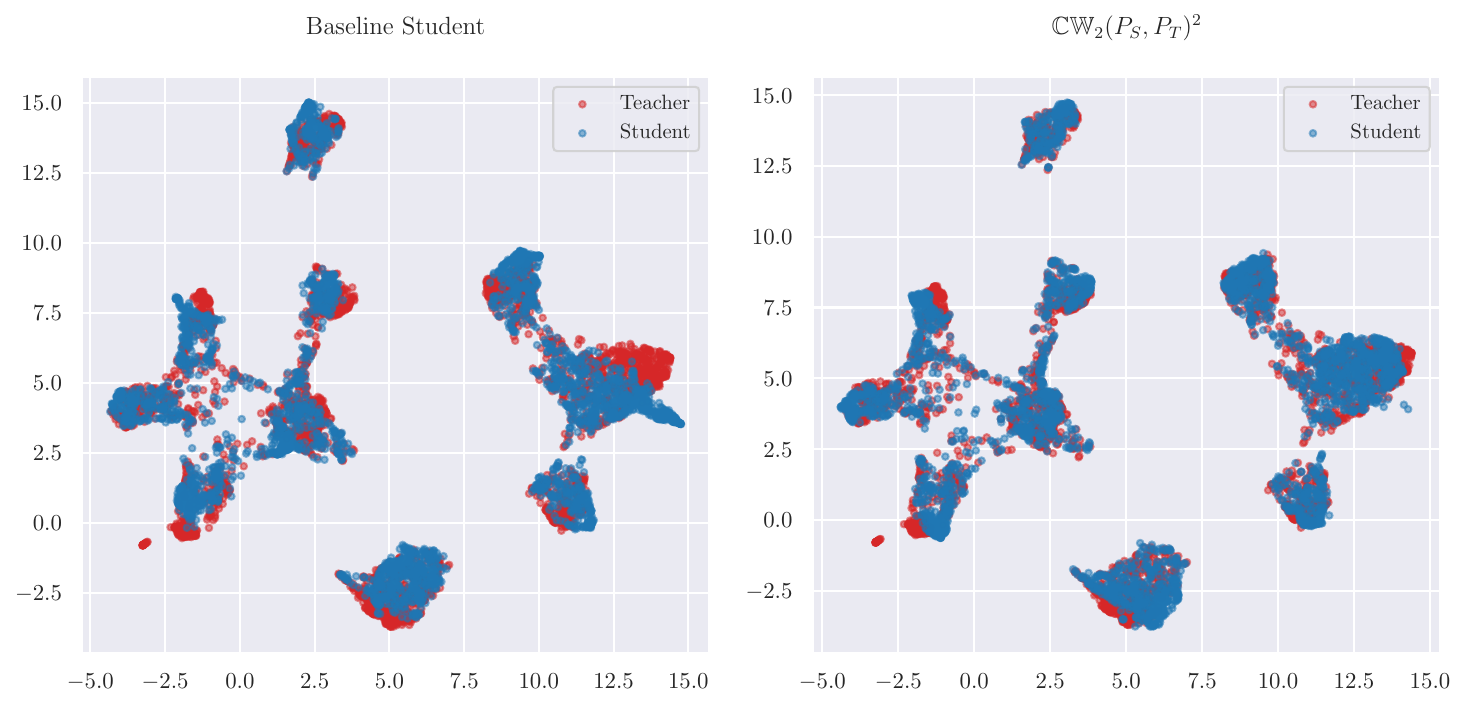}
    \caption{Feature alignment analysis of the baseline student (left), and the \gls{kddm} student with the $\mathbb{CW}$ metric (right). Teacher features are shown in red, whereas student features are shown in blue. Overall, \gls{kddm} better aligns teacher and student features.}
    \label{fig:comparison-tsne-svhn}
\end{figure}

\section{Conclusion}\label{sec:conclusion}

In this paper, we propose a novel, general framework for \gls{kd}, based on the idea of distribution matching. This principle seeks to align student and teacher encoder networks, via a term depending on the probability distribution of its features. We theoretically motivate this approach using results from domain adaptation~\cite{redko2017theoretical}, which show that, under suitable conditions, the performance of the teacher and student network is bounded by the distribution matching term, as well as the difference between student and target networks. We further validate our framework empirically, using various known probability metrics or discrepancies, such as the Wasserstein distance and the Kullback-Leibler divergence, showing that these methods improve over the simple optimization of the student network over training data. Overall, our work opens new possibilities for the theory of \gls{kd}, via connections with domain adaptation theory.

\bibliographystyle{apalike}
\bibliography{refs.bib}

\end{document}